\numberwithin{equation}{section}
\newtheorem{theorem}{Theorem}
\newtheorem{lemma}[theorem]{Proposition}
\begin{document}

\title{Time Delay Estimation of Traffic Congestion Propagation Due to Accidents Based on Statistical Causality}
\author{%
  YongKyung Oh\affil{1},
  JiIn Kwak\affil{2}
  and
  Sungil Kim\affil{1,2}\corrauth
}

% \shortauthors is used in copyright information in the end of the paper
\shortauthors{Authors}

\address{%
  \addr{\affilnum{1}}{Department of Industrial Engineering, Ulsan National Institute of Science and Technology, Ulsan, Republic of Korea}
  \addr{\affilnum{2}}{Artificial Intelligence Graduate School, Ulsan National Institute of Science and Technology, Ulsan, Republic of Korea}}

% corresponding author
\corraddr{sungil.kim@unist.ac.kr; Tel: +82-52-217-3195.
}

\begin{abstract}
	The accurate estimation of time delays is crucial in traffic congestion analysis, as this information can be used to address fundamental questions regarding the origin and propagation of traffic congestion. However, the exact measurement of time delays during congestion remains a challenge owing to the complex propagation process between roads and high uncertainty regarding future behavior. To overcome this challenge, we propose a novel time delay estimation method for the propagation of traffic congestion due to accidents using lag-specific transfer entropy (TE). The proposed method adopts Markov bootstrap techniques to quantify uncertainty in the time delay estimator. To the best of our knowledge, our proposed method is the first to estimate time delays based on causal relationships between adjacent roads. We validated the method’s efficacy using simulated data, as well as real user trajectory data obtained from a major GPS navigation system in South Korea.
\end{abstract}

\keywords{
statistical causality; transfer entropy; time delay estimation; traffic trajectory data; traffic incident analysis
}

\maketitle

\section{Introduction}
Traffic congestion represents a universal problem for urban life owing to the dramatic growth in vehicle use, expansion of the economy and infrastructure, and proliferation of delivery services, among other factors. Traffic congestion frequently spreads into adjacent roads \citep{nguyen2016discovering}, resulting in greater damage to the overall traffic network.

	\begin{figure}[htb]
		\centering
		\includegraphics[width=0.5\linewidth]{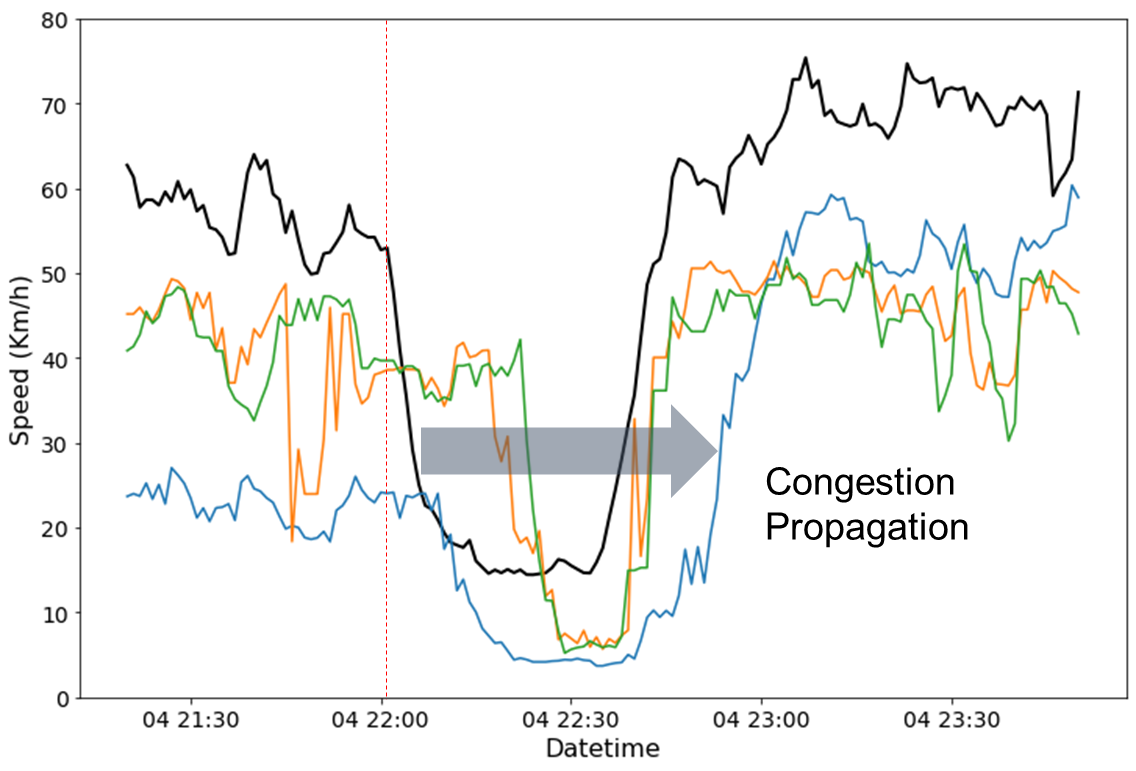}
		\caption{Motivating example: traffic congestion propagation}
		\label{fig:eta}
	\end{figure}
	
Consequently, the accurate estimation of time delays has become crucial in addressing fundamental questions regarding the origin points and propagation of traffic congestion. Figure \ref{fig:eta} from \cite{lee2022quantifying} illustrates this study’s objective by showing how the impact of a traffic accident propagates to incoming roads. The black solid line represents the average speed on the road segment where the accident occurs, while the orange, green, and blue solid lines represent the average speeds on the three adjacent incoming roads. The average speed on the road segment was computed from GPS trajectory data provided by the NAVER Corporation using a map-matching process \citep{newson2009hidden}. We can observe a time lag when the impact of an accident propagates to incoming roads. The time delay increases in the order of blue, green, and orange.

However, certain aspects of traffic congestion propagation pose statistical challenges to the accuracy of time delay estimation. First, the average speed on an incoming road is affected by various geographic and topological characteristics, such as road length and width, as well as road network topology. The impact of a traffic accident is distributed among all incoming roads in a complex pattern according to these characteristics. Furthermore, the duration of congestion is dynamic. As shown in the figure, the road denoted in blue exhibits a longer congestion duration than the other incoming roads. That is, the time delay in traffic congestion propagation does not simply denote a temporal pattern shift, but involves complicated temporal dynamics. Finally, data uncertainty is inherent in average road speeds. Trajectory-based average road speeds may be highly volatile depending on the availability of user data over a specific period.

To overcome the challenges outlined above, we propose a novel time delay estimation method for traffic congestion propagation between roads using lag-specific transfer entropy (TE). Our main contributions are as follows: 
	\begin{itemize}
		\item We provide a model-free approach to estimate congestion propagation delays using a lag-specific TE estimator in complex urban road systems. 
		\item We quantify uncertainty in time delay estimation using bootstrap techniques. This uncertainty quantification is employed to evaluate the reliability of time delay estimates and serves as a basis for hyperparameter optimization. 
		\item We show that decomposition and nonlinear normalization with a sliding window are effective time series preprocessing methods for revealing causal relationships between traffic speed data.
		\item We validate the proposed method through numerical simulations and real user trajectory data obtained from a major GPS navigation system in South Korea.
	\end{itemize}
	
The remainder of this paper is organized as follows: Section \ref{s2} provides an overview of related studies. Section \ref{s3} presents crucial background information pertaining to the proposed method. Section \ref{s4} outlines our proposed time delay estimation method. Sections \ref{s5} and \ref{s6} validate the proposed method using simulated and real congestion propagation data, respectively. Finally, concluding remarks are presented in Section \ref{s7}.

\section{Related Work} \label{s2}
Various topics have been studied regarding the estimation of time delays due to traffic accidents, including travel time delay \citep{habtemichael2015incident}, incident duration \citep{garib1997estimating,nam2000exploratory}, real-time crash identification \citep{zhu2022real}, and incident impact quantification \citep{cao2021quantification,lee2022quantifying}. However, unlike the aforementioned studies, we focused on the propagation delay of traffic congestion caused by accidents in a road network.

\subsection{Time Delay Estimation for Congestion Propagation}

Although our proposed method is the first to estimate time delays for congestion propagation in road traffic networks, time delay estimation (TDE) is not a new problem. In digital signal processing, TDE refers to the task of ascertaining the differences in arrival times between signals received at sensor array. 
The most widespread approach for TDE is cross-correlation \cite{knapp1976generalized, souden2009broadband}. 
Supposing that signals are received from two sensors, the delay between the sensors can be estimated using the time lag that maximizes the cross-correlation between filtered versions of the received signal. 

Limited attempts have been made to perform cross-correlation analyses with traffic speed data. Conventional TDE methods based on cross-correlation have been applied to a real road vehicle pass-by measurement to enable traffic monitoring using passive acoustic sensors \cite{marmaroli2012comparative}. 
Similarly, \cite{chandra2008cross} conducted a cross-correlation analysis to prove the existence of a significant relationship between the current value of speed at a specific station, as well as past speed values at upstream and downstream stations in a freeway traffic network. We refer to this method as time-lagged cross-correlation (TLCC). This approach is not applicable in the presence of nonstationarity. 

To quantify the TLCC level between two nonstationary time series at different scales, \cite{shen2015analysis} proposed a time-lagged detrended cross-correlation analysis approach. This method, referred to as DCCA, divides an entire time series into overlapping boxes to handle nonstationarity \cite{vassoler2012dcca}. 

The method proposed in the present study was compared with TLCC and DCCA for evaluation purposes.

\subsection{Traffic Causality Analysis}
Various traffic causal analysis methods have been developed to identify causal relationships among congested roads and detect congestion propagation patterns.
The authors of \cite{sun2005traffic} forecasted future traffic flow by ranking input variables to identify a subset of the Bayesian network as the set of cause nodes using the Pearson correlation coefficient. A two-step mining architecture has been proposed to capture the origin points of road traffic congestion \citep{chawla2012inferring}. TE was used to reveal the delay propagation network among multiple airports with a time series of airport delays \cite{xiao2020study}. However, the aforementioned approaches primarily focus on revealing causal relationships and do not provide information regarding estimated time delays during congestion propagation.

\section{Preliminary}\label{s3}

\subsection{Bootstrap for Markov Chains}\label{bootstrap}
Suppose that $\{X_t\}_{t\geq 1}$ is a stationary Markov chain with a finite state space $S = \{s_1,\ldots,s_n\}$, where $n\in\mathbb{N}$. Let $\mathbf{P} = (p_{ij})\in\mathbb{R}^{n\times n}$ be the transition probability matrix of the chain and the stationary distribution by $\boldsymbol{\pi} = (\pi_1, \dots, \pi_n)$. Thus, for any $1 \le i, j \le n$, $p_{ij}=P(X_{t+1}=s_j|X_t=s_i)$ and $\pi_i=P(X_t=s_i)$. 
Given a time series $\{X_1, \ldots, X_{L}\}$ of size $L$ from a stationary Markov chain, $\pi_i$ and $p_{ij}$ can be estimated as
\begin{equation}\label{markov}
	\begin{aligned}
		\hat{\pi}_i &= \frac{1}{L} \sum_{t=1}^{L} \mathbb{1} (X_t=s_i), \quad
		\hat{p}_{ij} = \frac{1}{\hat{\pi}_i L} \sum_{t=1}^{L} \mathbb{1} (X_t=s_i, X_{t+1}=s_j).
	\end{aligned}
\end{equation}
The bootstrap observations $\left\{X_1^*, \ldots, X_{L}^*\right\}$ can now be generated using the estimated transition matrix and marginal distribution in Eq.~\eqref{markov} \citep{kreiss2012bootstrap}. 
\begin{enumerate}[\hspace{0.5cm}1)]
	\item Generate a random variable $X_1^*$ from the discrete distribution on $\{1,\ldots,n\}$ that assigns mass $\hat{\pi}_i$ to $s_i$, $1\leq i\leq n$.
	\item Generate a random variable $X_{t+1}^*$ from the discrete distribution on $\{1,\ldots,n\}$ that assigns mass $\hat{p}_{ij}$ to $j$, $1\leq j \leq n$, where $s_i$ is the value of $X_{t}^*$.
	\item Repeat Step 2) until a simulated time series $\{X_1^*,\ldots, X_{L}^*\}$ has been obtained. 
\end{enumerate}

\subsection{Lag-Specific Transfer Entropy}\label{tde}
TE is a measurement of directed information flow \citep{schreiber2000measuring} based on the concept of Shannon entropy \citep{shannon1948mathematical} in the field of information theory. 
For a discrete random variable $I$ with probability distribution $p(i)$, the Shannon entropy represents the average number of bits required to optimally encode independent draws, calculated as follows:
\begin{equation}\label{shannon}
	H(I) =-\sum_{i} p(i)\log_2 p(i).
\end{equation}

Eq.~\eqref{shannon} can be easily extended to the concept of conditional entropy using two discrete random variables $I$ and $J$:
\begin{equation}\label{conditional}
	H(I|J)=-\sum\sum p(i,j)\log_2 p(i|j)
\end{equation}
This equation can be used to measure information flow between two discrete random variables.

Consider two discrete random variables, $I$ and $J$, with marginal probability distributions $p(i)$ and $p(j)$, and joint probability $p(i,j)$. Suppose both variables represent stationary Markov processes of orders $k$ and $l$, respectively. 
For the order $k$ Markov process $I$, Eq.~\eqref{shannon} can be extended to 
$$
H^{(k)}(I)=-\sum_i p(i_{t},i_{t-1}^{(k)})\log p(i_{t}|i_{t-1}^{(k)}),
$$
where $i_{t-1}^{(k)}=(i_{t-1},\ldots,i_{t-k})$. Analogously, the information flow from $J$ to $I$ is measured by quantifying the deviation from the generalized Markov property $p(i_t|i_{t-1}^{(k)}) = p(i_t|i_{t-1}^{(k)}, j_{t-u}^{(l)} )$ for an arbitrary source--target lag  $u$, as follows: 
\begin{equation}\label{lag} %J=X, I=Y
	T_{J\rightarrow I}^{(k,l)}(t,u)=\sum p(i_{t},i_{t-1}^{(k)},j_{t-u}^{(l)})\log\frac{p(i_{t}|i_{t-1}^{(k)},j_{t-u}^{(l)})}{p(i_{t}|i_{t-1}^{(k)})}.
\end{equation}
Eq.~\eqref{lag} preserves the computational interpretation of TE as an information transfer, which is the only relevant option in keeping with Wiener's principle of causality \citep{wibral2013measuring}.
The transfer entropy is known to be biased in small samples \citep{marschinski2002analysing}. To correct any bias, \citep{marschinski2002analysing} proposed the effective transfer entropy ($ETE$):
\begin{equation}\label{ete2}
	ETE_{J\rightarrow I}^{(k,l)}(t,u)=T_{J\rightarrow I}^{(k,l)}(t,u)-T_{J_{\text{shuffled}}\rightarrow I}^{(k,l)}(t,u).
\end{equation}
where $T_{J_{\text{shuffled}}\rightarrow I}^{(k,l)}$ indicates the transfer entropy using a shuffled version of time series $J$. The shuffling process randomly draws values from the original time series and realigns them to generate a new time series. Thus, shuffling eliminates any time series dependencies of $J$, as well as statistical dependencies between $J$ and $I$. Note that $T_{J_{\text{shuffled}}\rightarrow I}^{(k,l)}$ converges to zero as the sample size increases, and any nonzero value of $T_{J_{\text{shuffled}}\rightarrow I}^{(k,l)}(t,u)$ is a result of the small sample effect. To ensure estimation consistency, shuffling is repeated, and the average of the shuffled transfer entropy estimates across all iterations serves as an estimator for the small sample bias, which is subsequently subtracted from the Shannon or Rényi transfer entropy estimate to correct any bias.

\section{Methodology}\label{s4}

\subsection{Problem Definition}
Suppose that congestion information is transferred from a source road to a destination road with a time delay of $u$. The objective of time delay estimation is to estimate $u$ given previously observed traffic speed data on the two roads, denoted as $\{X_t\}^L_{t=1}$ and $\{Y_t\}^L_{t=1}$, respectively. Therefore, the time delay estimation task entails formulating a function $f(\cdot)$ that computes the source--target lag $u$,
	$ \left[\{X_t\}_{t=1}^L, \{Y_t\}_{t=1}^L \right] \xrightarrow[]{f(\cdot)} u.$

The proposed time delay estimation algorithm comprises three steps. First, bootstrapping for each time series is performed using preprocessing methods. %Then, normalize the time series to process time series on different scales. 
Next, the transfer entropy computation provides the estimated time delay lag $u^*$. Finally, the distribution of time delay lag is estimated to determine the existence of a statistical causal relationship.

\subsection{Preprocessing and Bootstrapping}\label{boot}
Consider a time series of congested traffic speed data, $\{X_t\}^L_{t=1}$, which has the properties of scale dependence, nonlinearity, and nonstationarity. 
To effectively identify the causal relationship within such a time series, appropriate preprocessing methods are essential. 

To this end, we first decompose the time series into a trend and its residual, as follows:
\begin{equation}\label{trend}
	\forall t, X_t = \mathcal{T}_{t} + R_{t} = \frac{1}{m} \sum_{j=0}^{m-1} X_{t-j} + R_{t},
\end{equation}
where $\mathcal{T}_{t}$ and $R_{t}$ are the trend and residual components, respectively. The trend component is a moving average of order $m$, representing the mean forefront value at time $t$.
The purpose of the trend component is to smooth the time series for estimating the underlying trend. After extracting the underlying trend from $\{X_t\}^L_{t=1}$, the residual time series $\{R_t\}^L_{t=1}$ is assumed to be a stationary Markov process. 
\textcolor{black}{The assumption of Markovian property in traffic speed is not a novel notion. Many prior traffic speed prediction and modeling studies have been conducted under this assumption \citep{hong2006highway, chandra2009predictions, vlahogianni2014short, pavlyuk2017short, song2019short}.} Based on $\{R_t\}^L_{t=1}$, we can generate the bootstrap residuals $\{R^{*(b)}_t\}^L_{t=1}$ as explained in Section \ref{bootstrap}. Subsequently, we can easily obtain a bootstrap time series $\{X^{*(b)}_t\}^L_{t=1}$ by $\mathcal{T}_{t}+R^{*(b)}_t$ for $t=1,\ldots,L$.

Nonlinear normalization with a sliding window is then applied to the obtained time series to address the scale-dependency, nonlinearity, and nonstationarity of traffic speed data. To ensure the data are scale-independent and close to linear \citep{wang2019desaturated}, the standard normal cumulative distribution function $\Phi$ is applied. A sliding window technique has similarly been employed to handle a nonstationary time series in \citep{ogasawara2010adaptive}. 
Let $\mathbf{X}_{t,w}=\left\{X^{*(b)}_k\right\}_{k=t-w+1}^t$ be the forefront sequence of $X^{*(b)}_t$ with a sliding window size of $w$, and $F_{25,t}$, $F_{50,t}$, and $F_{75,t}$ be the 25th, 50th, and 75th percentiles of $\mathbf{X}_{t,w}$, respectively. Note that these percentiles depend on the location of the sliding window. \textcolor{black}{Then, a normalized time series $\left\{\tilde{X}^{*(b)}_t\right\}^L_{t=1}$ can be obtained by} 
\begin{equation}\label{normalize}
	\tilde{X}^{*(b)}_t= \Phi\left(0.5\times\frac{X^{*(b)}_t-F_{50,t}}{F_{75,t}-F_{25,t}}\right).
\end{equation}

To verify the effectiveness of the nonlinear normalization method expressed in Eq.~\eqref{normalize}, we compared its performance with that of existing normalization methods \citep{wang2019desaturated, ogasawara2010adaptive}, including the min-max method $\left(\tilde{X}^{*(b)}_t=\frac{X^{*(b)}_t}{\max\mathbf{X}_{t,w}}\right)$ and the z-score method $\left(\tilde{X}^{*(b)}_t=\frac{X^{*(b)}_t-\mu(\mathbf{X}_{t,w})}{\sigma(\mathbf{X}_{t,w})}\right)$ with and without a sliding window technique (see Section \ref{s5}).

\subsection{Time Delay Estimation}\label{sym}

Using Eq.~\eqref{ete2}, the time lag in a causal relationship $J\rightarrow I$ can be estimated by solving the following optimization problem: 
\begin{equation}\label{opt}
	\hat{u}=\underset{u \in \mathbb{N}}{\operatorname{argmax}}\;ETE_{J\rightarrow I}^{(k,l)}(t,u).%,\quad U = \left\{1,2,3,\cdots, u_{max}\right\}.
\end{equation}
In this study, we assume $k=\ell=1$. 

To compute the lag-specific ETE in Eq.~\eqref{opt}, we discretize continuous data using symbolic encoding. This discretization can be performed by partitioning the data into a finite number of bins. We denote the bounds specified for the $n$ bins by $q_1,q_2,\ldots,q_{n-1}$, where $q_1<q_2<\cdots<q_{n-1}$. For the normalized time series in Eq.~\eqref{normalize}, we obtain the encoded time series $\{J^{*(b)}_t\}^L_{t=1}$ by the following equation:
\begin{equation}\label{encode}
	J^{*(b)}_t=\begin{cases} 
		1 &\mbox{for } \tilde{X}^{*(b)}_t\leq q_1 \\
		2 & \mbox{for } q_1<\tilde{X}^{*(b)}_t<q_2 \\
		\vdots & \quad\vdots \\
		n & \mbox{for } \tilde{X}^{*(b)}_t\geq q_{n-1}.
	\end{cases}
\end{equation}
The choice of bins depends on the distribution of data. In the case where tail observations are of particular interest, binning is typically based on empirical quantiles, such that the left and right tail observations are allocated into separate bins. 
In this study, we implemented symbolic encoding with $n=3$ based on 5\% and 95\% empirical quantiles, thereby emphasizing speed extremes caused by dynamic speed changes and traffic accidents.

Consequently, we obtain $\{J^{*(b)}_t\}^L_{t=1}$ from $\{\tilde{X}^{*(b)}_t\}^L_{t=1}$, and $\{I^{*(b)}_t\}^L_{t=1}$ from $\{\tilde{Y}^{*(b)}_t\}^L_{t=1}$, respectively, for $b=1,\ldots,B$. Given $\{J^{*(b)}_t\}^L_{t=1}$ and $\{I^{*(b)}_t\}^L_{t=1}$, $b=1,\ldots,B$, we obtain bootstrap observations of the time lag, $u^{*(1)}, \ldots,u^{*(B)}$ using Eq.~\eqref{opt}.

\subsection{Uncertainty Quantification of Time Delay Estimates}\label{sf}

Suppose that bootstrap observations of the time lag follow a distribution $\mathcal{G}$,
	$$
	u^{*(1)}, \ldots,u^{*(B)}\sim \mathcal{G},
	$$
which is unknown in practice. Let $\mu$ and $\sigma^2$ denote the mean and variance of $\mathcal{G}$, respectively, which can be estimated by 
	$$
	\hat{\mu}_B=\frac{1}{B}\sum_{b=1}^B u^{*(b)},\quad
	\hat{\sigma}_B^2=\frac{1}{B}\sum_{b=1}^B \left(u^{*(b)}\right)^2-\hat{\mu}_B^2.
	$$
Proposition \ref{p1} implies that 1) the bootstrap estimate $\hat{\mu}_B$ is an unbiased estimate of $\mu$, and 2) $\frac{1}{B}\hat{\sigma}_B^2$ quantifies the uncertainty of $\hat{\mu}_B$. That is, we can evaluate the uncertainty of the bootstrap estimate $\hat{\mu}_B$ using $\frac{1}{B}\hat{\sigma}_B^2$, which is practically useful because $\mu$ is unknown. This approach can be applied to hyperparameter tuning. In this study, we used a grid search to determine a set of hyperparameters (length of time series ($L$) and sliding window size ($w$)) that minimizes $\frac{1}{B}\hat{\sigma}_B^2$.

\begin{lemma}\label{p1}
	Let $u^{*(1)}, \ldots,u^{*(B)}$ be a bootstrap sample, and $E(u^{*(b)})=\mu$, $Var(u^{*(b)})=\sigma^2$. Then, the sample mean $\hat{\mu}_B=\frac{1}{B}\sum_{b=1}^B u^{*(b)}$ approximately follows $\mathcal{N}\left(\mu,\frac{1}{B}\hat{\sigma}_B^2\right)$, where $\hat{\sigma}_B^2$ is the sample variance of the bootstrap sample.    
\end{lemma}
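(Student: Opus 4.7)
The plan is to recognize the statement as a direct application of the Central Limit Theorem (CLT) combined with Slutsky's theorem, since the bootstrap replicates are generated independently across $b = 1, \ldots, B$. Specifically, each $u^{*(b)}$ comes from an independently drawn bootstrap residual sequence (Section \ref{boot}), so that conditional on the original data, $u^{*(1)}, \ldots, u^{*(B)}$ can be treated as i.i.d.\ with common mean $\mu$ and variance $\sigma^2$. That i.i.d.\ structure is really the only structural fact one needs; the rest is classical asymptotics.

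First, I would invoke the Lindeberg--Lévy CLT on $\{u^{*(b)}\}_{b=1}^B$ to conclude that
$\sqrt{B}(\hat{\mu}_B - \mu)/\sigma$ converges in distribution to $\mathcal{N}(0,1)$ as $B \to \infty$. Equivalently, $\hat{\mu}_B$ is approximately $\mathcal{N}(\mu, \sigma^2/B)$ for large $B$. Second, I would apply the weak Law of Large Numbers to $\{u^{*(b)}\}$ and to $\{(u^{*(b)})^2\}$ so that
$\hat{\sigma}_B^2 = \tfrac{1}{B}\sum_{b=1}^B (u^{*(b)})^2 - \hat{\mu}_B^2$
converges in probability to $E[(u^{*(b)})^2] - \mu^2 = \sigma^2$. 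Hence $\sigma/\hat{\sigma}_B \xrightarrow{p} 1$.

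Third, combining these two steps via Slutsky's theorem gives
$(\hat{\mu}_B - \mu)/\sqrt{\hat{\sigma}_B^2/B} = \bigl[(\hat{\mu}_B - \mu)/\sqrt{\sigma^2/B}\bigr]\cdot(\sigma/\hat{\sigma}_B) \xrightarrow{d} \mathcal{N}(0,1)$,
which is exactly the assertion that $\hat{\mu}_B \stackrel{\cdot}{\sim} \mathcal{N}(\mu, \hat{\sigma}_B^2/B)$.

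I do not expect a genuine obstacle here, since the result is essentially a textbook CLT-plus-Slutsky argument. The only point that deserves care is the justification of independence of the bootstrap replicates $u^{*(b)}$: although each individual bootstrap series $\{X_t^{*(b)}\}$ is constructed from a Markov chain and is therefore internally dependent, the replicates across $b$ are drawn with independent random seeds, so the resulting scalar statistics $u^{*(b)}$ are i.i.d. With that observation in hand, the proof is a two-line invocation of CLT and Slutsky, and no assumption on the form of $\mathcal{G}$ beyond the finiteness of its second moment is required.
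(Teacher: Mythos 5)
Your argument is the same as the paper's: the paper also writes $\sqrt{B}(\hat{\mu}_B-\mu)/\hat{\sigma}_B=(\sigma/\hat{\sigma}_B)\cdot\sqrt{B}(\hat{\mu}_B-\mu)/\sigma$, invokes the Central Limit Theorem and Slutsky's theorem, and uses the consistency $\hat{\sigma}_B^2\xrightarrow{p}\sigma^2$ (which you justify via the law of large numbers, a detail the paper simply asserts). Your additional remark on the independence of the bootstrap replicates across $b$ is a harmless elaboration of the same proof; there is no substantive difference.
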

\begin{proof}
	As $\hat{\sigma}_B^2\rightarrow \sigma^2$ in probability, 
	$\frac{\sqrt{B}(\hat{\mu}_B-\mu)}{ \hat{\sigma}_B}=\frac{\sigma}{ \hat{\sigma}_B}\frac{\sqrt{B}(\hat{\mu}_B-\mu)}{\sigma}\xrightarrow[]{d}\mathcal{N}(0,1)$ by the Central Limit Theorem and Slutsky's Theorem \citep{casella2021statistical}. Thus, $\hat{\mu}_B$ approximately follows a normal distribution,
	$\hat{\mu}_B\sim \mathcal{N}\left(\mu,\frac{1}{B}\hat{\sigma}_B^2\right)$. \end{proof}

To determine whether the bootstrap estimate $\hat{\mu}_B$ is reliable, we compare $\hat{\sigma}_B^2$ with a predetermined threshold $\sigma_T^2$. That is, if $\hat{\sigma}_B^2>\sigma_T^2$, we can conclude that $\hat{\mu}_B$ is not reliable, and congestion is therefore not propagated on the corresponding road segment.

To determine an appropriate value of $\sigma_T^2$, we employ the concept of the tolerance interval ($TI$). The $TI$ is a statistical interval in which a specified proportion $\gamma$ of a population will fall with a certain level of confidence $(1-\alpha)$.
	By definition, a $(\gamma,1-\alpha)$-$TI$ of $\hat{\mu}_B$, 
	$$
	TI=\left[\mu - k_{\gamma,\alpha}\sqrt{\frac{1}{B}\hat{\sigma}_B^2},\mu + k_{\gamma,\alpha}\sqrt{\frac{1}{B}\hat{\sigma}_B^2}\right],
	$$
	satisfies 
	$$
	P\left[P(\hat{\mu}_B\in TI) \ge \gamma\right] = 1-\alpha,
	$$
where $k_{\gamma,\alpha}$ is the tolerance factor \cite{witkovsky2014exact}. Then, $\sigma_T^2$ can be determined by $k_{\gamma,\alpha}\sqrt{\frac{1}{B}\sigma_T^2}=1$ (min) to yield a $\pm 1$ minute $TI$. With $B=100$, $\gamma=0.9$, and $\alpha=0.01$, the present study uses $\sigma_T^2=\frac{100}{k^2_{0.9,0.01}}=5.05^2$.

Consider the propagation of traffic congestion caused by accidents. Here, we define the propagation path by a sequence of incoming roads in the direction opposite to the traffic flow, where $k$th element in the propagation path is denoted as Hop($k$-1). Let $\hat{\mu}_{B, k-1}$ and $\hat{\sigma}^2_{B, k-1}$ denote the bootstrap estimate and sample variance at Hop($k$-1), respectively. Hop$0$ corresponds to the road where the accident occurred. We consider Hop($k$-1) to be statistically \textit{significant} if $\hat{\mu}_{B, k-1}$ and $\hat{\sigma}^2_{B, k-1}$ satisfy the following conditions: 1) $\hat{\sigma}^2_{B, k-1}<\sigma^2_T$ and 2) $\hat{\mu}_{B, k-2}<\hat{\mu}_{B, k-1}$. The second condition states that the time delay between Hop$0$ and Hop($k$-1) must exceed that between Hop$0$ and Hop($k$-2).

\section{Simulation Studies}\label{s5}
\begin{figure*}[!h]
	\centering
	\subfloat[Simulated data]{
		\includegraphics[clip,height=5.5cm]{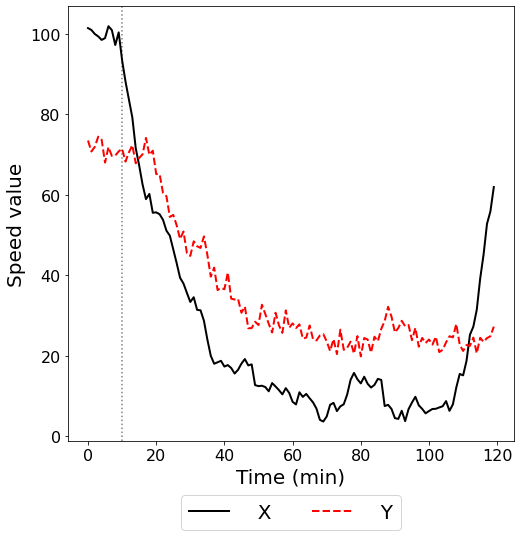}
	}
	\subfloat[Proposed method]{
		\includegraphics[clip,height=5.5cm]{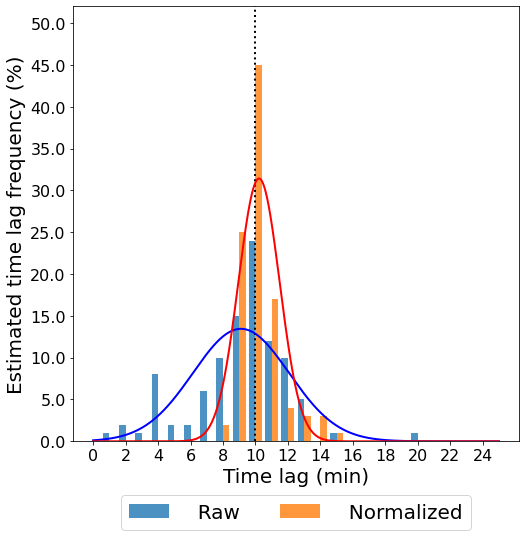}}
	\subfloat[DCCA method]{
		\includegraphics[clip,height=5.5cm]{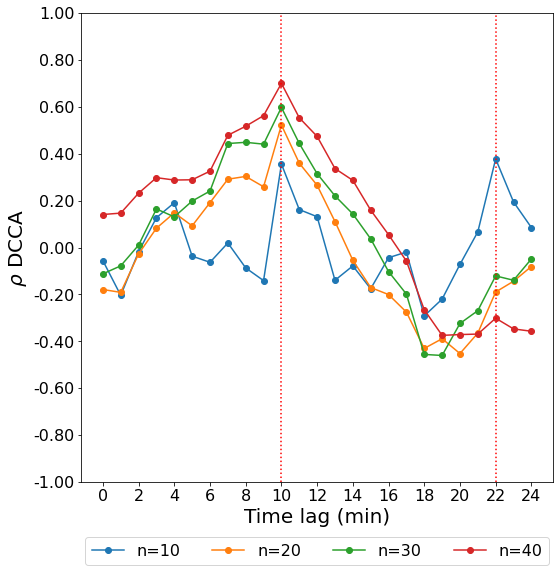}}
	\caption{Results of simulation study}
	\label{fig:norm_p}
\end{figure*}

The proposed method was validated using simulated data. Two time series -- $\{X_t\}_{t=1}^{120}$ and $\{Y_t\}_{t=1}^{120}$ -- were generated by
\begin{align*}\label{e1}
	X_t&=\begin{cases} 
		100 + \epsilon_{x,t} & \mbox{for } t < 10 \\
		0.95X_{t-1}+\epsilon_{x,t} &\mbox{for } 10\le t<95 \\
		1.10X_{t-1}+\epsilon_{x,t} & \mbox{for } t\le 120
	\end{cases}, \\
	Y_t&=\begin{cases}
		70 + \epsilon_{y,t} & \mbox{for } t<10\\
		0.5X_{t-u_0}+20+\epsilon_{y,t} &\mbox{for }t \geq 10
	\end{cases},
\end{align*}
where $\epsilon_{x,t}\sim\mathcal{N}(0,2)$ and $\epsilon_{y,t}\sim\mathcal{N}(0,2)$. A predetermined source--target lag ($u_0$) exists such that a significant information flow from $X$ to $Y$ is formed, but not vice versa. Figure \ref{fig:norm_p}(a) depicts two time series with $u_0 = 10$, where the black solid and red dashed lines represent $\{X_t\}_{t=1}^{120}$ and $\{Y_t\}_{t=1}^{120}$, respectively. This simulation represents a typical congestion propagation scenario between two adjacent roads $R_X$ and $R_Y$, assuming that there was a traffic accident on $R_X$ at $t = 10$ and that congestion was resolved at $t = 95$. With the time shift $u_0 = 10$, the congestion on $R_X$ propagates to $R_Y$. 

The proposed method was applied to simulated data with $m=2$ and $w=20$, as described in Section \ref{s4}. 
Figure \ref{fig:norm_p}(b) compares the distributions of bootstrap observations obtained from two-time series without normalization, to those obtained from two time series with nonlinear normalization. 
The red and blue lines in the figure denote the values of $\left(\hat{\mu}_B,\hat{\sigma}_B^2\right)$ in a distribution form, that is, $\left(9.54, 3.94^2\right)$ and $\left(10.30, 1.35^2\right)$, respectively. Here, a normal distribution is used for visualization purposes.
We confirmed that nonlinear normalization with a sliding window improved the accuracy of time delay estimation, as $1.35^2<3.94^2$.

\begin{table}[h]
\centering
\caption{Simulation results with comparison methods}\label{tab: sim}
\begin{tabular}{@{}cccccc@{}}
\toprule
 & \textbf{TLCC} & \textbf{DCCA(10)} & \textbf{DCCA(20)} & \textbf{DCCA(30)} & \textbf{DCCA(40)} \\ \midrule
$\hat{u}$ & 11.00   & 22.00       & 10.00       & 10.00       & 10.00       \\ \bottomrule
\end{tabular}
\end{table}

For comparison purposes, the TLCC and DCCA methods were also applied to the simulated data. The DCCA method requires a hyperparameter $n$, which indicates the size of the overlapping box \cite{vassoler2012dcca}. Here, we used multiple values of $n$ ($10,20,30,40$) to obtain the results of time delay estimation, as shown in Figure \ref{fig:norm_p}(c) for the DCCA method. Furthermore, Table \ref{tab: sim} summarizes the results of conventional TDE methods. These results show that both the TLCC and DCCA methods generally yield reasonable time delay estimates for the simulated data. In particular, it is recommended to set $n$ to be greater than 20.

\begin{table*}[htb!]
	\caption{Simulation results comparison ($u_0=10$) with $B=100$}\label{tab: simulation}
	\centering
	\begin{tabular}{@{}cccrrrrrr@{}}
		\toprule
		\multirow{2}{*}{\textbf{Decomposition}} & \multirow{2}{*}{\textbf{Normalization}}     & \multirow{2}{*}{\textbf{Metrics}} & \multicolumn{5}{c}{\textbf{Window length}}       & \multicolumn{1}{c}{\multirow{2}{*}{\textbf{Average}}} \\
		&                                             &                                   & \multicolumn{1}{c}{\textbf{10}} & \multicolumn{1}{c}{\textbf{20}} & \multicolumn{1}{c}{\textbf{30}} & \multicolumn{1}{c}{\textbf{40}} & \multicolumn{1}{c}{\textbf{120 (all)}} & \multicolumn{1}{c}{}                                  \\ \midrule
		\multirow{14}{*}{\textbf{false}}        & \multirow{3}{*}{\textbf{none}} & $\hat{\mu}_B$                     & -                               & -                               & -                               & -                               & 11.23                                  & 11.23                                                 \\
		&                                             & $\hat{\sigma}^2_B$                & -                               & -                               & -                               & -                               & 7.03                                   & 7.03                                                  \\
		&                                             & MAE                               & -                               & -                               & -                               & -                               & 6.13                                   & 6.13                                                  \\ \cmidrule(l){2-9} 
		& \multirow{3}{*}{\textbf{min-max}}           & $\hat{\mu}_B$                     & 12.93                           & 12.64                           & 13.14                           & 12.71                           & 14.89                                  & 13.26                                                 \\
		&                                             & $\hat{\sigma}^2_B$                & 7.21                            & 7.49                            & 7.27                            & 7.39                            & 6.72                                   & 7.21                                                  \\
		&                                             & MAE                               & 6.73                            & 6.94                            & 6.92                            & 6.82                            & 7.20                                   & 6.92                                                  \\ \cmidrule(l){2-9} 
		& \multirow{3}{*}{\textbf{z-score}}           & $\hat{\mu}_B$                     & 13.06                           & 13.51                           & 12.97                           & 13.29                           & 14.84                                  & 13.53                                                 \\
		&                                             & $\hat{\sigma}^2_B$                & 6.98                            & 6.97                            & 7.00                            & 7.15                            & 6.73                                   & 6.97                                                  \\
		&                                             & MAE                               & 6.49                            & 6.67                            & 6.52                            & 6.75                            & 7.23                                   & 6.73                                                  \\ \cmidrule(l){2-9} 
		& \multirow{3}{*}{\textbf{nonlinear}}         & $\hat{\mu}_B$                     & 13.29                           & 12.86                           & 12.28                           & 13.17                           & 14.27                                  & 13.17                                                 \\
		&                                             & $\hat{\sigma}^2_B$                & 7.12                            & 7.24                            & 7.13                            & 7.07                            & 6.89                                   & 7.09                                                  \\
		&                                             & MAE                               & 6.77                            & 6.66                            & 6.36                            & 6.63                            & 7.03                                   & 6.69                                                  \\ \midrule
		\multirow{14}{*}{\textbf{true}}         & \multirow{3}{*}{\textbf{none}} & $\hat{\mu}_B$                     & -                               & -                               & -                               & -                               & 9.54                                   & 9.54                                                  \\
		&                                             & $\hat{\sigma}^2_B$                & -                               & -                               & -                               & -                               & 3.94                                   & 3.94                                                  \\
		&                                             & MAE                               & -                               & -                               & -                               & -                               & 2.45                                   & 2.45                                                  \\ \cmidrule(l){2-9} 
		& \multirow{3}{*}{\textbf{min-max}}           & $\hat{\mu}_B$                     & 14.12                           & 10.57                           & 8.88                            & 10.28                           & 16.97                                  & 12.16                                                 \\
		&                                             & $\hat{\sigma}^2_B$                & 4.24                            & 2.81                            & 2.51                            & 3.53                            & 4.94                                   & 3.61                                                  \\
		&                                             & MAE                               & 4.75                            & 1.88                            & 1.95                            & 2.26                            & 7.38                                   & 3.64                                                  \\ \cmidrule(l){2-9} 
		& \multirow{3}{*}{\textbf{z-score}}           & $\hat{\mu}_B$                     & {10.09}                  & 10.28                           & 10.63                           & 11.97                           & 16.83                                  & 11.96                                                 \\
		&                                             & $\hat{\sigma}^2_B$                & 2.87                            & 3.58                            & 3.90                            & 4.19                            & 5.90                                   & 4.09                                                  \\
		&                                             & MAE                               & 1.58                            & 2.26                            & 2.55                            & 2.75                            & 7.96                                   & 3.42                                                  \\ \cmidrule(l){2-9} 
		& \multirow{3}{*}{\textbf{nonlinear}}         & $\hat{\mu}_B$                     & 10.78                           & 10.30                           & 10.78                           & 10.99                           & 13.38                                  & 11.25                                                 \\
		&                                             & $\hat{\sigma}^2_B$                & 2.98                            & {1.35}                   & 1.49                            & 2.04                            & 6.72                                   & 2.91                                                  \\
		&                                             & MAE                               & 1.53                            & {0.94}                   & 1.25                            & 1.72                            & 6.04                                   & 2.30                                                  \\ \bottomrule
	\end{tabular}
\end{table*}

To investigate the proposed method’s performance, we conducted simulation experiments under various settings of (1) decomposition, (2) normalization, and (3) length of the sliding window. Performance was evaluated using $\hat{\mu}_B$, $\hat{\sigma}^2_B$, and MAE, where $\textit{MAE} = \frac{1}{B}\sum_{i=1}^B |\hat{u}_i - u_0|$. Values closer to $u_0$ indicate a more accurate estimate $\hat{\mu}_B$. Likewise, smaller values of $\hat{\sigma}^2_B$ and MAE indicate better results. As summarized in Table \ref{tab: simulation}, the decomposition technique improved the overall performance, and nonlinear normalization with $w=20$ generally performed better than all other normalization methods in terms of $\hat{\sigma}_B$ and MAE. This implies that nonlinear normalization with $w=20$ produced the most precise and accurate estimates among the tested schemes.

\section{Real Data Example: Accident-Driven Traffic Congestion Propagation}\label{s6}
\begin{figure}
	\centering
	\includegraphics[clip,width=1\linewidth]{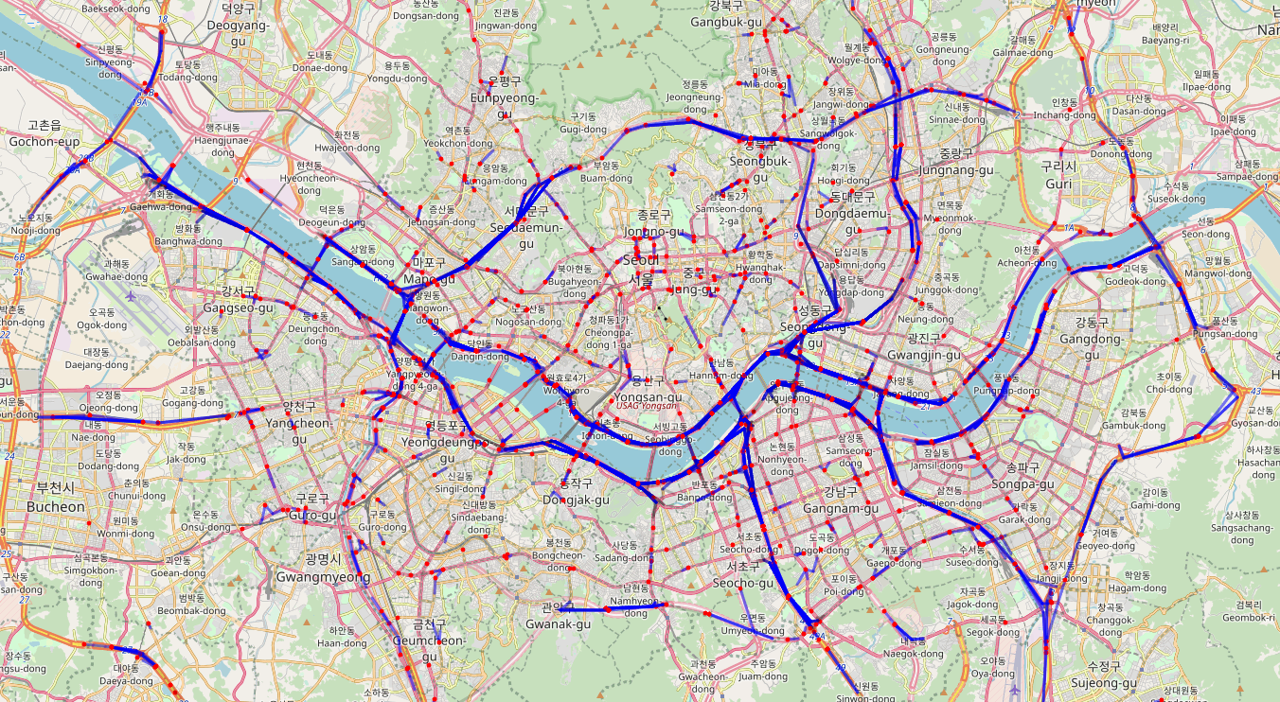}
	\caption{Locations of traffic accidents and their significant propagation paths in the city of Seoul from September 2020 to February 2021 (red dot: location of the accident, blue line: significant propagation path)}
	\label{fig:accident_map}
\end{figure}
Two types of datasets were considered from various sources: a traffic dataset provided by the NAVER corporation navigation team, and an accident dataset provided by the Korean National Police Agency. The traffic dataset encompasses trajectory-based speed and traffic road networks of the major metropolitan area of Seoul, where nearly half of the country's population resides. The speed data are described by GPS trajectories. A GPS trajectory consists of a series of points with latitude, longitude, and timestamp information generated during travel. To align a sequence of observed user positions within the road network, we used a map-matching process \citep{newson2009hidden}. Each accident record includes the reported time, information source, category, incident description, and point of origin described by both geographical coordinates and road segment ID, as discribed in Table \ref{t3}.

\begin{table}[h]
\centering
\caption{An example of a real accident record}\label{tab:incident_info}
\begin{tabular}{@{}ll@{}}
\toprule
\textbf{Event ID}           & \textbf{3786580}                                                                                                                     \\ \midrule
\textbf{Created datetime}   & 2021-01-30 19:00                                                                                                                     \\
\textbf{Information source} & Korean National Police Agency                                                                                                        \\
\textbf{Category}           & Accident                                                                                                                             \\
\textbf{Description}        & \begin{tabular}[c]{@{}l@{}}Traffic accident on the first   lane from Guro IC \\      on Nambu Belt Way to Anyang Bridge\end{tabular} \\
\textbf{Longitude}          & 126.87657                                                                                                                            \\
\textbf{Latitude}           & 37.48874                                                                                                                             \\ \bottomrule
\end{tabular}\label{t3}
\end{table}

The proposed method was validated on 3,197 real traffic accidents that occurred between September 2020 and February 2021 in Seoul. Figure \ref{fig:accident_map} presents the accident locations, as denoted by red stars, where the blue lines indicate significant propagation paths. 
%Here, we define the propagation path by a sequence of incoming roads in the direction opposite to the traffic flow. Traffic congestion caused by traffic accidents propagates along the propagation paths. The $k$th element in the propagation path is denoted as Hop($k$-1). 
Let $\hat{\mu}_{B, k-1}$ and $\hat{\sigma}^2_{B, k-1}$ denote the bootstrap estimate and sample variance at Hop($k$-1), respectively, for $k=2,3,4$. In this study, we investigated up to $k=4$. $k=1$ was excluded because Hop$0$ refers to the road where the accident occurred. 

\begin{table*}[htb]
\centering
\caption{Summary of time delay estimation results for 3,197 traffic accidents}\label{tab:stat}
\begin{tabular}{@{}ccccc@{}}
    \toprule
    \textbf{}      & \textbf{\begin{tabular}[c]{@{}c@{}}Number of \\      roads\end{tabular}} & \textbf{\begin{tabular}[c]{@{}c@{}}Significant \\      roads\end{tabular}} & \textbf{\begin{tabular}[c]{@{}c@{}}Significance \\      ratio\end{tabular}} & \textbf{\begin{tabular}[c]{@{}c@{}}Average \\      time delay (min)\end{tabular}} \\ \midrule
    \textbf{Hop$1$} & 5,036                                                                & 3,483                                                                     & 69.16\%                                                                    & 8.95                                                                            \\
    \textbf{Hop$2$} & 6,856                                                                & 4,479                                                                     & 65.33\%                                                                    & 11.10                                                                           \\
    \textbf{Hop$3$} & 9,721                                                                & 6,139                                                                     & 63.15\%                                                                    & 11.97                                                                           \\ \bottomrule
\end{tabular}
\end{table*}

Table~\ref{tab:stat} summarizes the time delay estimation results for all 3,197 traffic accidents. To ensure consistency within results, the hyperparameter $(w,L)=(60,180)$ was used for all accidents based on the grid search. There are 5,036 roads at Hop$1$ associated with accidents, approximately 69.16\% of which were revealed to be significant, with an average time delay of 8.95 minutes. For Hop$2$ and Hop$3$, 65.33\% and 63.15\% of the roads were revealed to be significant with average time delays of 11.10 and 11.97 minutes, respectively.

We selected two representative cases among the traffic data to detail how the proposed method identifies causal relationships and estimates time lag. Case 1 represents a simple road network with few propagation paths, whereas Case 2 represents a complex road network with many propagation paths. 
For comparison purposes, the TLCC and DCCA methods were also applied with equivalent settings to those used in the simulation study.

%%%%%%%%%%%
\subsection{Case 1: Simple Traffic Network}
The accident occurred on September 8, 2020, at 06:44 AM. The blue star in Figure \ref{fig:case1} denotes the exact location of the accident. Case 1 has one propagation path $[A,B,C,D]$. The black, red, blue, and green line segments in the figures indicate Hop$0$, Hop$1$, Hop$2$, and Hop$3$, respectively. Time delays were estimated using average speed data recorded at one-minute intervals from the previous hour to the subsequent two hours based on the time when the accident was reported.
\begin{figure}[h!]
	\centering
	\includegraphics[clip,width=0.5\linewidth]{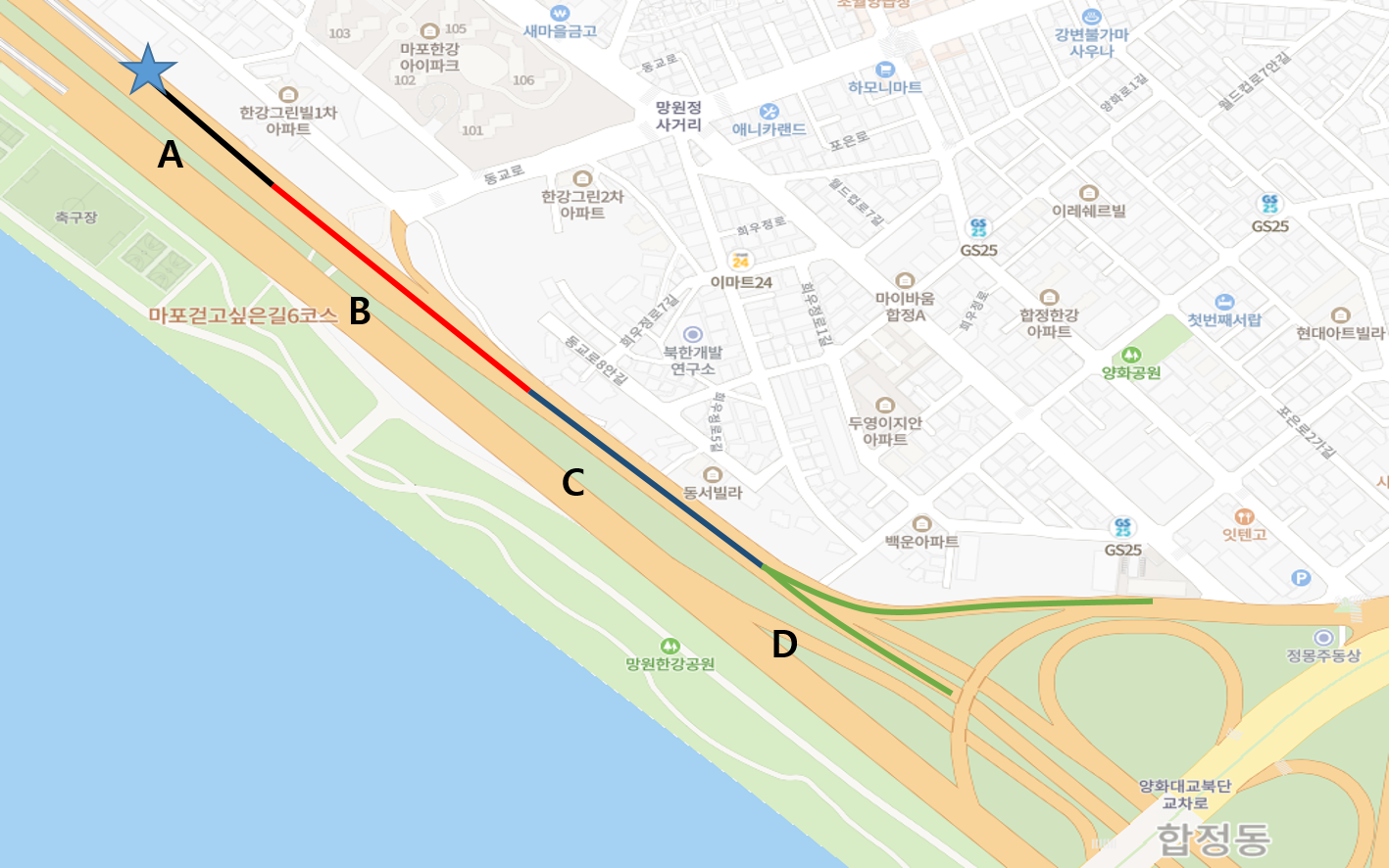}
	\caption{Traffic accident for Case 1}
	\label{fig:case1}
\end{figure}

\begin{figure*}[!ht]
	\centering
	\subfloat[Without normalization]{
		\includegraphics[clip,width=0.35\linewidth]{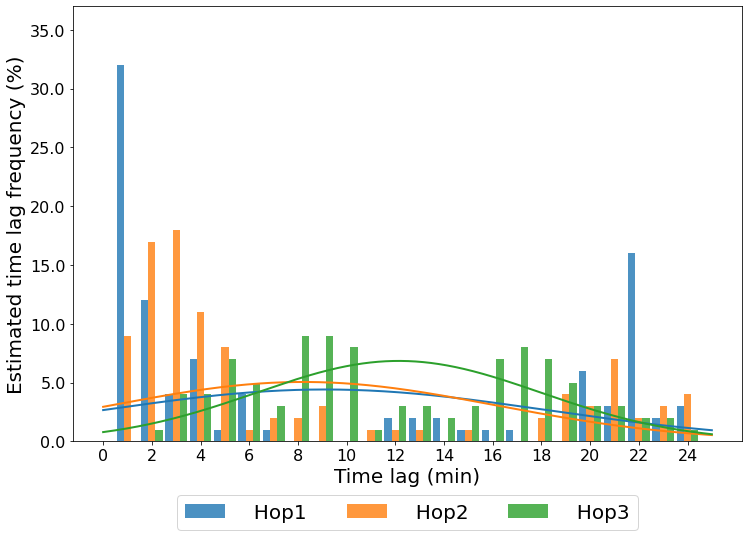}}\quad
	\subfloat[Nonlinear normalization]{
		\includegraphics[clip,width=0.35\linewidth]{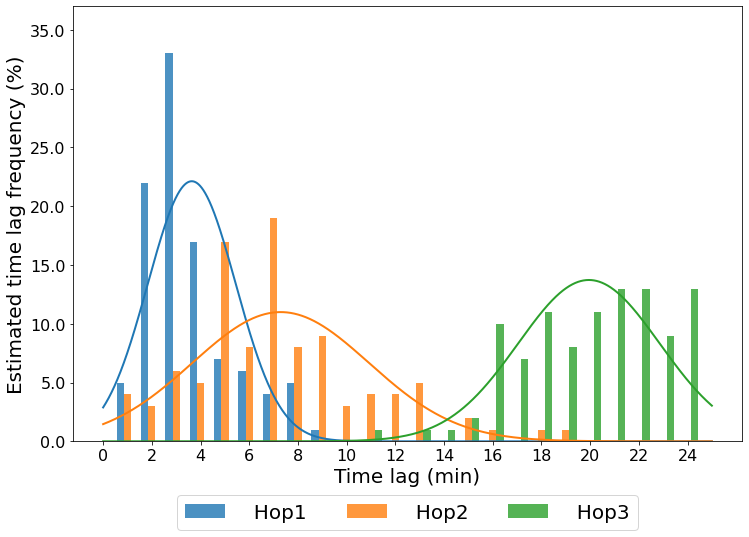}}
	\caption{Time delay estimation for Case 1}
	\label{fig:case_1}
\end{figure*}

Figures \ref{fig:case_1}(a) and \ref{fig:case_1}(b) show the results of time delay estimation for the propagation path $[A, B, C, D]$. 
It is apparent from the significantly smaller values of $\hat{\sigma}_B^2$ that the time series with nonlinear normalization produced a more consistent estimate of time delays that increased with each hop. We, therefore, conclude that the congestion effect of the accident propagated along the path to Hop1, Hop2, and Hop3 at 3.60, 7.30, and 19.97 min after the accident, respectively.

Unlike the proposed method, both TLLC and DCCA failed to identify the congestion propagation effect of the accident. Furthermore, the DCCA method produced inconsistent time delay estimates for varying values of $n$. Note that both the TLLC and DCCA methods do not provide uncertainty quantification of the time delay estimates.

\begin{table}[!htb]
\caption{Results of time delay estimation for Case 1}
\label{tab:case_1}
\centering
\begin{tabular}{@{}ccccccc@{}}
\toprule
\multirow{2}{*}{} &  \multicolumn{2}{c}{\textbf{Hop1}} & \multicolumn{2}{c}{\textbf{Hop2}} & \multicolumn{2}{c}{\textbf{Hop3}} \\ \cmidrule(lr){2-3} \cmidrule(lr){4-5} \cmidrule(l){6-7} 
 & $\hat{\mu}_B$     & $\hat{\sigma}_B^2$      & $\hat{\mu}_B$      & $\hat{\sigma}_B^2$      & $\hat{\mu}_B$      & $\hat{\sigma}_B^2$       \\ \midrule
\textbf{TLCC} & 7.00   & -    & 7.00 & -     & 0.00  & -     \\
\textbf{DCCA(10)} & 0.00   & -    & 19.00 & -     & 0.00  & -     \\
\textbf{DCCA(20)} & 0.00   & -    & 24.00 & -     & 17.00 & -     \\
\textbf{DCCA(30)} & 0.00   & -    & 10.00 & -     & 16.00 & -     \\
\textbf{DCCA(40)} & 0.00   & -    & 2.00  & -     & 8.00  & -     \\ \midrule
\textbf{without normalization}                                        & 9.62            & 83.70            & 8.07            & 61.31            & 12.08           & 34.83            \\
\textbf{nonlinear normalization}                                 & 3.60            & 2.88            & 7.30            & 13.83            & 19.97           & 8.93            \\ 
\bottomrule
\end{tabular}
\begin{flushleft}
\end{flushleft}
\end{table}

\subsection{Case 2: Complex Traffic Network}
The accident occurred on September 4, 2020 at 10:16 PM, and affected five propagation paths: $[A, B, C, D]$, $[A, E, F, G]$, $[A, H, I, J]$, $[A, H, K, M]$, and $[A, H, K, L]$. Figure \ref{fig:case2} depicts the exact location of the accident, along with the five propagation paths. For each path, the previous hour and the subsequent two hours were considered for time delay estimation. 
Figure \ref{fig:case_2} and Table \ref{tab: case2} present the results of time delay estimation. In Path 1, no specific causal relationship could be found, as shown in Figure \ref{fig:case_2}(a). This finding is supported by the corresponding high values of $\hat{\sigma}_B^2$ ($>\sigma^2_T=5.05^2$) in Table \ref{tab: case2}. Similarly, we can conclude that the congestion effect of the accident at road $A$ propagated along the second hop of Paths 2 and 3, and the third hop of Paths 4 and 5. Moreover, the values of $\hat{\sigma}_B^2$ in Table \ref{tab: case2} indicate that the congestion effect propagated along Path 3 up to Hop2, as depicted in Figure \ref{fig:case_2}(b), and along Paths 4 and 5 up to Hop3, as depicted in Figure \ref{fig:case_2}(c). For Paths 4 and 5, the time delay estimates are (8.23, 15.65, 22.06) and (8.22, 15.55, 20.75), respectively.
\begin{figure}[h!]
	\centering
	\includegraphics[clip,width=0.5\linewidth]{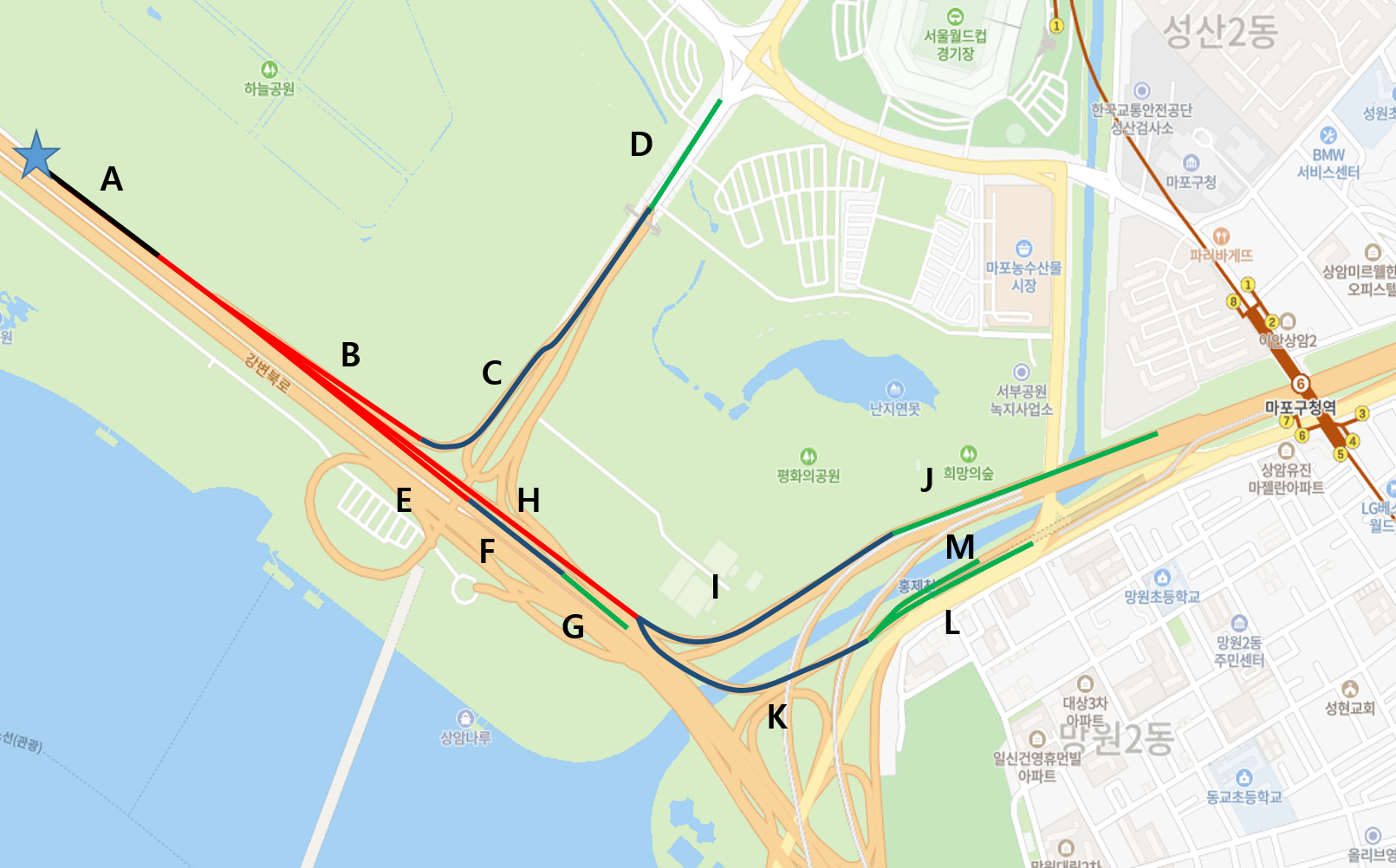}
	\caption{Traffic accident for Case 2}
	\label{fig:case2}
\end{figure}

\begin{figure*}[!htb]
	\centering
	\subfloat[Path 1]{
		\includegraphics[clip,width=0.333\linewidth]{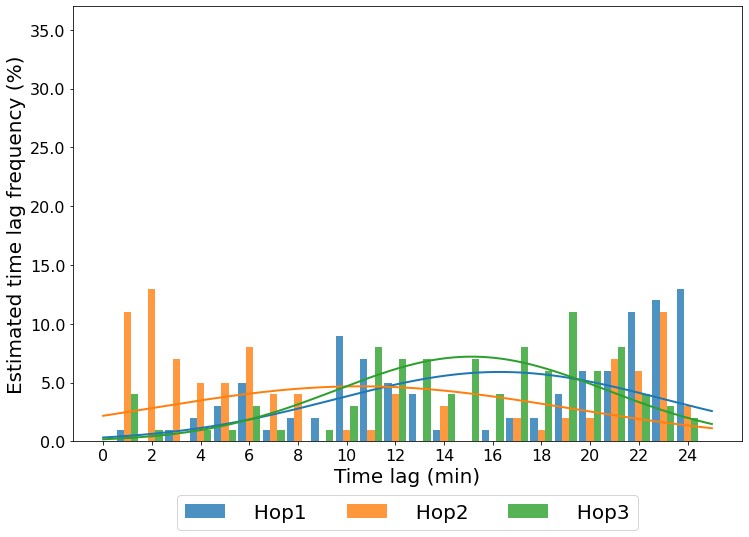}}
	\subfloat[Path 3]{
		\includegraphics[clip,width=0.333\linewidth]{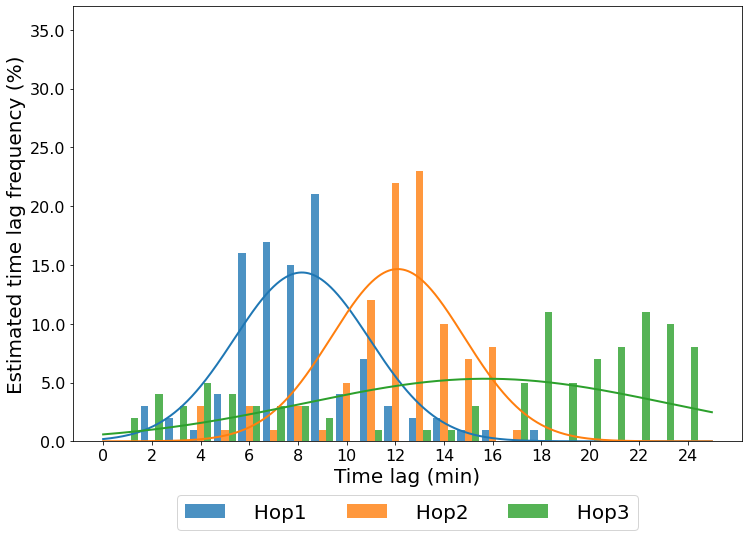}}
	\subfloat[Path 5]{
		\includegraphics[clip,width=0.333\linewidth]{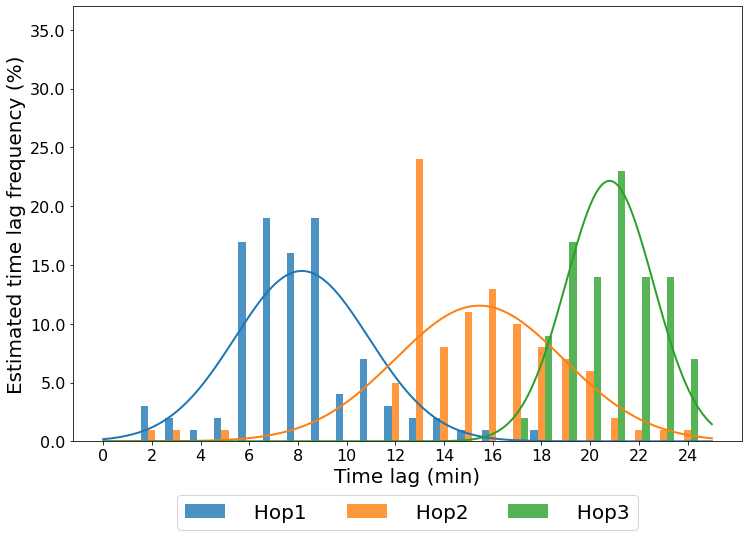}}
	\caption{Time delay estimation for Case 2}
	\label{fig:case_2}
\end{figure*}

\begin{table}[!htb]
\caption{Results of time delay estimation for Case 2}
\label{tab: case2}
\centering
\begin{tabular}{@{}clcccccc@{}}
\toprule
\multirow{2}{*}{\textbf{\begin{tabular}[c]{@{}c@{}}Propagation Path\end{tabular}}}      & \multicolumn{1}{c}{\multirow{2}{*}{\textbf{Methods}}} & \multicolumn{2}{c}{\textbf{Hop1}}  & \multicolumn{2}{c}{\textbf{Hop2}}  & \multicolumn{2}{c}{\textbf{Hop3}}  \\
                                                                                                & \multicolumn{1}{c}{}                                  & $\hat{\mu}_B$ & $\hat{\sigma}_B^2$ & $\hat{\mu}_B$ & $\hat{\sigma}_B^2$ & $\hat{\mu}_B$ & $\hat{\sigma}_B^2$ \\ \midrule
\multirow{3}{*}{\textbf{\begin{tabular}[c]{@{}c@{}}Path 1\\      $[A, B, C, D]$\end{tabular}}}  & \textbf{TLCC}                                         & 24.00         & -                  & 24.00         & -                  & 23.00         & -                  \\
                                                                                                & \textbf{DCCA(30)}                                         & 8.00          & -                  & 6.00          & -                  & 6.00          & -                  \\
                                                                                                & \textbf{Proposed}                                     & 16.03         & 47.09              & 11.59         & 73.56              & 15.17         & 30.46              \\ \midrule
\multirow{3}{*}{\textbf{\begin{tabular}[c]{@{}c@{}}Path 2 \\      $[A, E, F, G]$\end{tabular}}} & \textbf{TLCC}                                         & 22.00         & -                  & 12.00         & -                  & 4.00          & -                  \\
                                                                                                & \textbf{DCCA(30)}                                         & 2.00          & -                  & 5.00          & -                  & 5.00          & -                  \\
                                                                                                & \textbf{Proposed}                                     & 4.15          & 3.72               & 10.49         & 6.97               & 8.64          & 16.32              \\ \midrule
\multirow{3}{*}{\textbf{\begin{tabular}[c]{@{}c@{}}Path 3\\      $[A, H, I, J]$\end{tabular}}}  & \textbf{TLCC}                                         & 24.00         & -                  & 24.00         & -                  & 1.00          & -                  \\
                                                                                                & \textbf{DCCA(30)}                                         & 8.00          & -                  & 12.00         & -                  & 14.00         & -                  \\
                                                                                                & \textbf{Proposed}                                     & 8.21          & 7.97               & 12.13         & 7.03               & 15.72         & 57.16              \\ \midrule
\multirow{3}{*}{\textbf{\begin{tabular}[c]{@{}c@{}}Path 4\\      $[A, H, K, M]$\end{tabular}}}  & \textbf{TLCC}                                         & 24.00         & -                  & 24.00         & -                  & 24.00         & -                  \\
                                                                                                & \textbf{DCCA(30)}                                         & 8.00          & -                  & 2.00          & -                  & 19.00         & -                  \\
                                                                                                & \textbf{Proposed}                                     & 8.23          & 7.62               & 15.65         & 9.01               & 22.06         & 3.24               \\ \midrule
\multirow{3}{*}{\textbf{\begin{tabular}[c]{@{}c@{}}Path 5\\      $[A, H, K, L]$\end{tabular}}}  & \textbf{TLCC}                                         & 24.00         & -                  & 24.00         & -                  & 24.00         & -                  \\
                                                                                                & \textbf{DCCA(30)}                                         & 8.00          & -                  & 2.00          & -                  & 2.00          & -                  \\
                                                                                                & \textbf{Proposed}                                     & 8.22          & 7.67               & 15.55         & 11.01              & 20.75         & 3.17               \\ \bottomrule
\end{tabular}
\begin{flushleft}
\end{flushleft}
\end{table}

As in Case 1, both the TLCC and DCCA methods failed to estimate consistent time delays. The DCCA method with 30 overlapping boxes ($n=30$) obtained comparable results with the proposed method only for Path 3, as seen in Table~\ref{tab: case2}. From the results of both cases, it can be concluded that the proposed method identifies causal relationships and estimates the time lag more accurately than the conventional TDE methods.

\section{Conclusion}\label{s7}

Traffic congestion spreads its effects to the inflow roads, creating a causal relationship between the accident site and adjacent roads. To identify the said relationship, we developed a new method for estimating differences in congestion time. The proposed method utilizes a lag-specific TE estimator with decomposition and normalization techniques. Furthermore, we conducted extensive performance comparisons under varying experimental settings and found that the proposed decomposition and nonlinear normalization methods yield substantial performance improvements. We also confirmed that the proposed method produces more stable and robust results than the conventional TDE methods. Thus, the proposed time delay estimation method helps quantitatively understand the propagation of traffic congestion. 

Moreover, the bootstrap technique and its density estimation of statistical functionals enable the uncertainty quantification of time delay estimates. This uncertainty quantification allows us to evaluate the reliability of time delay estimates and serves as a basis for optimal hyperparameter tuning. Specifically, $\hat{\sigma}_B^2$ serves as a key indicator of a causal relationship between two-time series. We developed a rigorous and practical guidance for decision making based on the tolerance interval.

In this study, we only considered the method to obtain accurate time delay estimates using historical traffic data. Eventually, the proposed method will be used to predict time delays in GPS navigation systems, thereby providing users with more accurate estimated arrival times using real-time traffic data. Using our proposed method as a foundation, real-time delay prediction methods can be developed in future work.

%%%%%
\section*{Acknowledgments}

This work was partly supported by NAVER Corp. and the National Research Foundation of Korea (NRF) Grant funded by the Korea government (MSIT) (NRF-2021R1F1A1061038)

\section*{Conflict of interest}

The authors declare there is no conflict of interest.

%%%%%
% \bibliographystyle{plainnat}
\bibliographystyle{unsrt}
\bibliography{references}

% \section*{Supplementary (if necessary)}

\end{document}